\newtheorem{lemma}{Lemma}
\newtheorem{definition}{Definition}
\newcommand\norm[1]{\lVert#1\rVert}
\newcommand\set[1]{\left\{ #1 \right\}}
\newcommand\argmax{\arg\max}
\newcommand\Var{Var} 
\newcommand\E{\mathbb{E}} 
\newcommand \Tr{Tr}
\title{Motion Code: Robust Time Series Classification and Forecasting via Sparse Variational Multi-Stochastic Processes Learning}
\author{%
  Chandrajit Bajaj$^{*}$\\
  University of Texas at Austin
  \And
  Minh Nguyen$^{*}$\\
  University of Texas at Austin
}
\begin{document}

\maketitle

\begin{abstract}
Despite extensive research, time series classification and forecasting on noisy data remain highly challenging. The main difficulties lie in finding suitable mathematical concepts to describe time series and effectively separate noise from the true signals. Unlike traditional methods treating time series as static vectors or fixed sequences, we propose a novel framework that views each time series, regardless of length, as a realization of a continuous-time stochastic process. This mathematical approach captures dependencies across timestamps and detects hidden, time-varying signals within the noise. However, real-world data often involves multiple distinct dynamics, making it insufficient to model the entire process with a single stochastic model. To address this, we assign each dynamic a unique signature vector and introduce the concept of "most informative timestamps" to infer a sparse approximation of the individual dynamics from these vectors. The resulting model, called Motion Code, includes parameters that fully capture diverse underlying dynamics in an integrated manner, enabling simultaneous classification and forecasting of time series. Extensive experiments on noisy datasets, including real-world Parkinson's disease sensor tracking, demonstrate Motion Code’s strong performance against established benchmarks for time series classification and forecasting.
\end{abstract}

\section{INTRODUCTION}\label{intro}
Noisy time series analysis is a challenging problem due to the difficulty in finding appropriate mathematical models to represent and study such data, unlike images or text. For example, consider two groups of time series, each representing the audio data of a word pronounced by speakers with different accents, and with varying lengths between 80 and 95 data points (see \cref{fig:uneven_lengths}). Common methods, like distance-based \citep{Jeong2011-sn} or shapelet-based \citep{Bostrom2017-kx} approaches, treat time series as ordered vectors, but fail when those vectors are highly mixed, with many red series resembling blue ones (see \cref{fig:uneven_length_combine}). Deep learning methods, such as recurrent neural networks (e.g., LSTM-FCN \citep{Karim2019-cj}) or convolutional networks (e.g., ROCKET \citep{Dempster2020-ck}), struggle to capture higher-order correlations in datasets with many data points but a limited number of individual series, as in this example. Techniques that rely on empirical statistics, such as dictionary-based \citep{Schafer2015-vw} or interval-based methods \citep{Deng2013-ap}, are also unreliable because noise can distort the collected statistics, making it difficult to separate noise from true signals. These challenges motivate our approach, which models time series collections using \textbf{stochastic processes} to recover underlying dynamics from noisy data.

\begin{figure}[htb]
  \centering
  \subfloat[2 Time Series Collections]{\includegraphics[scale=0.28]{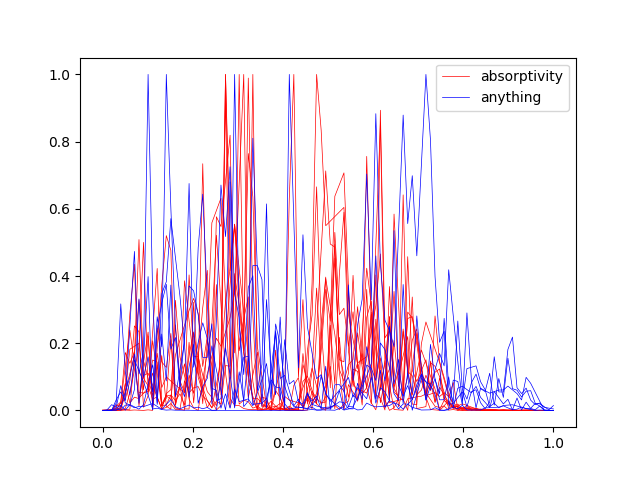}\label{fig:uneven_length_combine}}
  \hfil
  \subfloat[Absorptivity]{\includegraphics[scale=0.28]{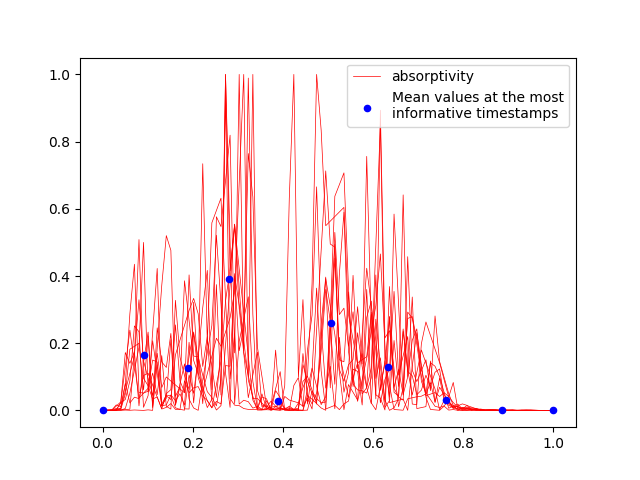}\label{fig:uneven_length_absorptivity}}
  \hfil
  \subfloat[Anything]{\includegraphics[scale=0.28]{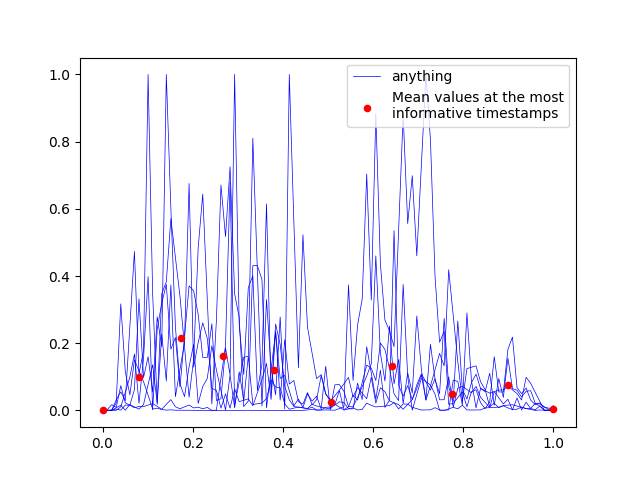}\label{fig:uneven_length_anything}}
  \caption{(a): Two Collections Of Time Series Representing Pronunciation Audio Data For The Words \textbf{Absorptivity} And \textbf{Anything}. (b) And (c): Most Informative Timestamps For The Pronunciation Of \textbf{Absorptivity} (Red) And \textbf{Anything} (Blue).} \label{fig:uneven_lengths}
\end{figure}

We propose modeling each time series as an instance of a common stochastic process governing the group's dynamics. While individual series may be noisy, our method recovers the underlying stochastic process, capturing key patterns such as increases, decreases, or stable phases, as shown by the skeleton approximations in \cref{fig:uneven_length_absorptivity} and \cref{fig:uneven_length_anything}. These approximations reveal core signals and the statistical relationships between data points.

Modeling time series with multiple dynamics, however, requires more than a single stochastic process. Unlike previous methods \citep{qi2010sparse,Durbin2012-jm,efficient_opt_sparse,pmlr-v206-moss23a} that focus on a single series, our framework introduces the most informative timestamps (see \cref{informative_timestamps_def}) to identify key features across multiple series. In addition, each dynamic model is assigned a signature vector, or motion code, which is jointly optimized using sparse learning techniques to prevent overfitting. This enables us to accurately distinguish between different dynamics. For instance, in \cref{fig:uneven_lengths}, scatter points reveal two distinct patterns that would otherwise be hidden in the noisy, mixed time series collections.

Combining these innovations, our proposed model, \textbf{Motion Code}, effectively learns from multiple underlying processes in a robust and comprehensive way. Our key contributions include:
\begin{enumerate}
    \item \textbf{Motion Code}: A model that jointly learns from noisy time series collections, explicitly modeling the underlying stochastic process to separate noise from core signals.
    \item \textbf{Irregular Data Handling}: Motion Code handles out-of-sync, varying-length, and missing data directly without interpolation, preserving temporal structure and avoiding alignment distortions.
    \item \textbf{Most Informative Timestamps}: An interpretable feature of Motion Code that employs variational inference to capture the essential dynamics in noisy time series.
\end{enumerate}

\subsection{Related Works}
\textbf{Time Series Classification}: Common techniques include distance-based methods \citep{Jeong2011-sn}, interval-based \citep{Deng2013-ap}, dictionary-based \citep{Schafer2015-vw}, shapelet-based \citep{Bostrom2017-kx}, feature-based \citep{Lubba2019-vg}, and ensemble models \citep{Lines2016-rv,Middlehurst2021-ja}. In deep learning, popular approaches involve convolutional neural networks \citep{Karim2019-cj,Dempster2020-ck}, residual networks \citep{Ma2016-vm}, autoencoders \citep{Hu2016-wo}.

\textbf{Time Series Forecasting}: Methods for forecasting include exponential smoothing \citep{Holt2004-oa}, TBATS \citep{De_Livera2011-av}, ARIMA \citep{Malki2021-ec}, probabilistic state-space models \citep{Durbin2012-jm}, and deep learning frameworks \citep{Lim2021-uh,Liu2021-zd}.

\textbf{Stochastic Modeling}: Gaussian processes \citep{Rasmussen2006-yi} are widely used for continuous-time series. To reduce computational cost, sparse Gaussian processes \citep{pmlr-v5-titsias09a} have been developed. Building on these approaches, advanced generative models with either approximate or exact inference \citep{qi2010sparse,efficient_opt_sparse,pmlr-v206-moss23a} have been introduced. However, these methods are typically limited to individual time series, whereas our approach extends to multi-time series collections, enabling joint modeling across different series.

The rest of the paper is organized as follows: Section 2 details the mathematical and algorithmic framework for Motion Code. Section 3 presents experiments and benchmarking for classification and forecasting tasks. Section 4 discusses the benefits of our framework.
\section{MOTION CODE: JOINT LEARNING ON COLLECTIONS OF TIME SERIES}
\subsection{Stochastic Process Formulation and Data Assumption}\label{problem_statement}
We formulate the time series problem in the context of stochastic processes.

\textbf{Input}: The training data consists of samples from $L$ underlying stochastic processes $\set{G_k}_{k=1}^L$ where $L \in \mathbb{N} \geq 2$. For each $k \in \overline{1, L}$, let $\mathcal{C}_k$ represent the sample set of $B_k$ time series $\set{y^{i, k}}_{i=1}^{B_k}$ drawn from process $G_k$. Each time series $y^{i, k}$ has the timestamps $T_{i, k} \subset \mathbb{R}_{+}$, and at each timestamp $t \in T_{i, k}$, the corresponding data point is denoted as $y^{i, k}(t) \in \mathbb{R}$. The full dataset consists of $L$ collections of time series $\set{\mathcal{C}_k}_{k=1}^L$
    
\textbf{Tasks And Required Outputs:} The primary objective is to construct a model $\mathcal{M}$ that jointly learns the dynamics of the processes $\set{G_k}_{k=1}^L$ from the dataset $\set{\mathcal{C}_k}_{k=1}^L$. The model parameters must be transferable to the following tasks:
\begin{enumerate}
    \item \textbf{Classification}: Given a new time series $y = (y(t))_{t \in T}$ with timestamps $T$, classify it into one of the $L$ possible groups.
    \item \textbf{Forecasting}: For a time series $y$ generated by $G_k$ ($k \in \overline{1, L}$), predict future values at new timestamps $T$, i.e., predict $\set{y_t}_{t \in T}$.
\end{enumerate}

\textbf{Stochastic Process And Notations}: Recall that a stochastic process $G$ is defined as $\set{g(t)}_{t \geq 0}$, where $g$ is a random function, and $g(t)$ is the random data point at time $t$. The random function $g$ is referred to as the \textbf{underlying signal} of the process $G$. For any timestamps set $T$, let $g_T$ denote the signal vector $(g(t))_{t \in T} \in \mathbb{R}^{|T|}$.

\begin{figure}[htb]
  \centering
  \subfloat[Weekend]{\includegraphics[scale=0.35]{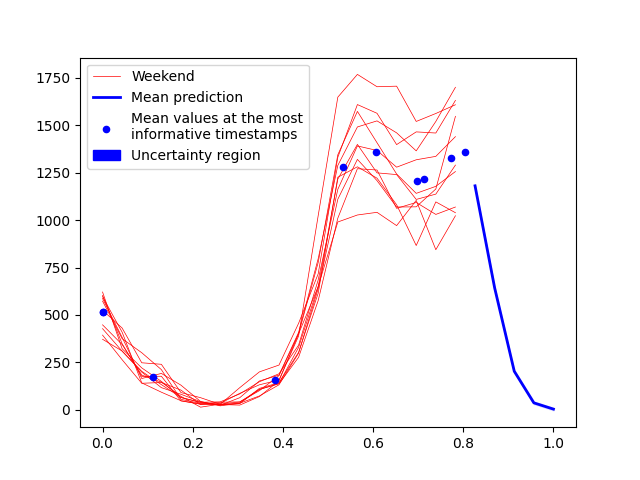}}
  \hfil
  \subfloat[Weekday]{\includegraphics[scale=0.35]{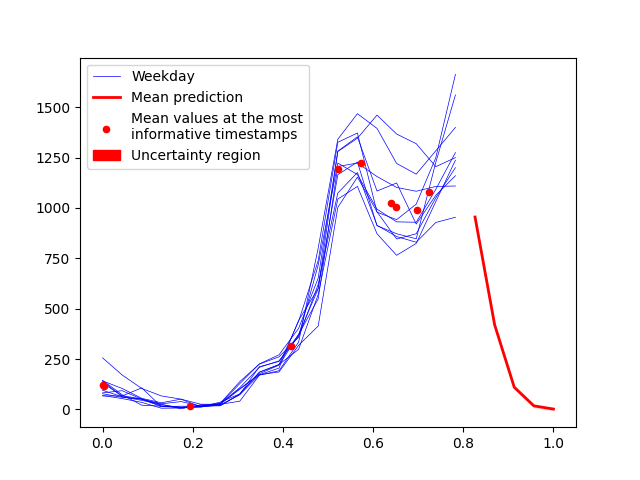}}
  \hfil
  \subfloat[Humidity Sensor]{\includegraphics[scale=0.35]{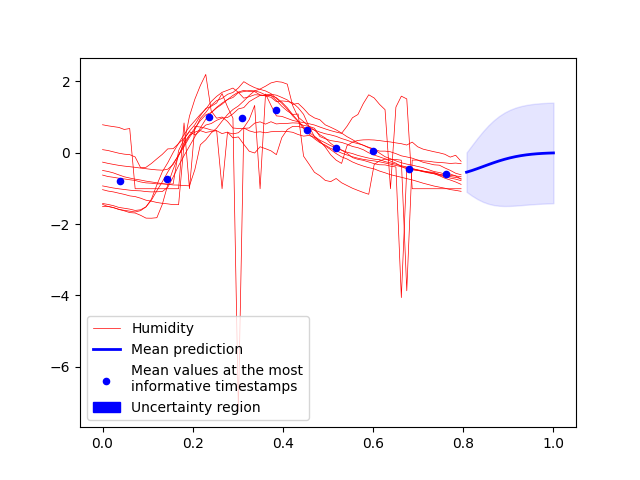}}
  \hfil
  \subfloat[Temperature Sensor]{\includegraphics[scale=0.35]{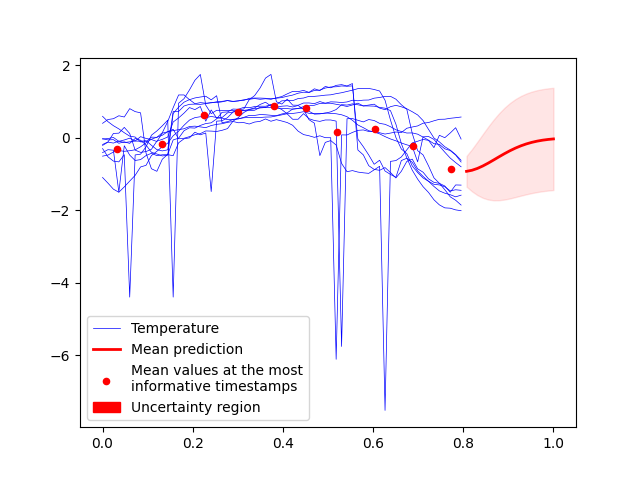}}
  \caption{Forecasting With Uncertainty For Time Series In \textbf{Chinatown} (Pedestrian Count On Weekends Vs Weekdays) And \textbf{MoteStrain} (Humidity Vs Temperature Sensor Values). \textbf{Motion Code} Is Trained On $[0, 0.8]$ And Predicted On $[0.8, 1]$.} \label{fig:most_informative_timestamp_prediction}
\end{figure}

\textbf{Data Assumptions:} We assume that the observed time series data points $y$ are normally distributed around the underlying signal of their respective stochastic processes. Specifically, let $g_k$ represent the underlying signal of the stochastic process $G_k$, i.e., $G_k = \set{g_k(t)}_{t \geq 0}$. Then, for $k \in \overline{1, L}$, the data $(y^{i, k})_{t \in T_{i, k}}$ assumes a Gaussian distribution with mean  $(g_k)_{T_{i, k}} = (g_k(t))_{t \in T_{i, k}} \in \mathbb{R}^{|T_{i, k}|}$ and covariance matrix $\sigma I_{|T_{i, k}|}$, where $I_n$ is the $n\times n$ identity matrix. The constant $\sigma \in \mathbb{R}_{+}$ is the unknown noise variance in the sample data from the underlying signals.

\subsection{The Most Informative Timestamps}\label{sec:most_informative_timestamps}
In this sub-section, we develop the core mathematical concept behind Motion Code called \textbf{the most informative timestamps}. The most informative timestamps of a time series collection generalize the concept of \textbf{inducing points} of a single time series introduced in \citep{pmlr-v5-titsias09a}. They are a small subset of timestamps that minimizes the mismatch between the original data and the information reconstructed using only this subset. The visualization of the most informative timestamps is provided in \cref{fig:most_informative_timestamp_prediction}, \cref{fig:sound_interpretable}, \cref{fig:parkinson_interpretable}, and is further discussed in \cref{discussion}.

To concretely define the most informative timestamps, we first introduce \textbf{generalized evidence lower bound function} (GELB) in \cref{ELBO_def}. We then define \textbf{the most informative timestamps} as the \textbf{maximizers} of this GELB function in \cref{informative_timestamps_def}.

\begin{definition}\label{ELBO_def}
Suppose we are given a stochastic process $G = \set{g(t)}_{t\geq 0}$ and a collection of time series $\mathcal{C} = \set{y^i}_{i=1}^B$ consisting of $B$ independent time series $y^i$ sampled from $G$. Each series $y^i = (y^i_t)_{t \in T_i}$ consists of $N_i = |T_i|$ data points and is called a \textbf{realization} of $G$. Let $m$ be a fixed positive integer. We define the \textbf{generalized evidence lower bound function} $\mathcal{L}=\mathcal{L}(\mathcal{C}, G, S^m, \phi)$ as a function of the data collection $\mathcal{C}$, the stochastic process $G$, the $m$-elements timestamps set $S^m = \set{s_1, \cdots, s_m} \subset \mathbb{R}_{+}$, and a variational distribution $\phi$ on $\mathbb{R}^m$ as follows:
\begin{equation}\label{eqn:ELBO_def}
\mathcal{L}(\mathcal{C}, G, S^m, \phi): = \frac{1}{B}\sum_{i=1}^B\int p(g_{T_i}|g_{S^m}) \phi(g_{S^m}) \log \frac{p(y^i|g_{T_i})p(g_{S^m})}{\phi(g_{S^m})}dg_{T_i} dg_{S^m}
\end{equation}
Recall that the vectors $g_{T_i}$ and $g_{S^m}$ are the signal vectors $(g(t))_{t \in T_i} \in \mathbb{R}^{|T_i|}$ and $(g(t))_{t \in S^m} \in \mathbb{R}^{|S^m|}$ on timestamps $T_i$ and $S^m$.
\end{definition}

\begin{definition}\label{informative_timestamps_def}
For a fixed $m \in \mathbb{N}$, the $m$-elements set $(S^m)^* \subset \mathbb{R}^+$ is said to be \textbf{the most informative timestamps} with respect to a noisy time series collection $\mathcal{C}$ of a stochastic process $G$ if there exists a variational distribution $\phi^*$ on $\mathbb{R}^m$ so that:
\begin{equation} \label{eqn:maximizer}
(S^m)^*, \phi^* = \argmax_{S^m, \phi} \mathcal{L}(\mathcal{C}, G, S^m, \phi)
\end{equation}
Also define the function $\mathcal{L}^{max}$ such that $\mathcal{L}^{max}(\mathcal{C}, G, S^m) 
:= \max_{\phi} \mathcal{L}(\mathcal{C}, G, S^m, \phi)$. Hence, $(S^m)^*$ can be found by maximizing $\mathcal{L}^{max}$ over all possible $S^m$.
\end{definition}

\subsection{Approximate Formula for $\mathcal{L}^{max}$}\label{sec:l_max_approx}
To compute the training loss function for Motion Code, we need to computationally approximate the function $\mathcal{L}^{max}$, which defines the \textbf{most informative timestamps} (see \cref{alg:motion_code}). Specifically, for a given set of $m$ timestamps $S^m$, a stochastic process $G$, and a collection $\mathcal{C}$ of $B$ time series $\set{y^i}_{i=1}^B$ sampled from $G$, our goal is to approximate $\mathcal{L}^{\max}(\mathcal{C}, G, S^m)$. This is achieved by approximating $G$ with a \textbf{kernelized Gaussian process} (see \cref{def:gaussian_process}), denoted as $H$, with a kernel function $K$.

\begin{definition} \label{def:gaussian_process}
A kernelized Gaussian process \citep{Rasmussen2006-yi} $H:= \set{h(t)}_{t \geq 0}$ with underlying signal $h$ is a stochastic process defined by the mean function $\mu:\mathbb{R} \to \mathbb{R}$ and the positive-definite kernel function $K:\mathbb{R} \times \mathbb{R} \to \mathbb{R}$. For the timestamps $T$, the joint distribution of the signal vector $h_T = (h(t))_{t \in T}$ is Gaussian and characterized by:
\begin{equation}
p(h_T)= p((h(t))_{t\in T}) = \mathcal{N}(\mu_T, K_{TT}), 
\end{equation}
Here $\mu_T$ is the mean vector $(\mu(t))_{t\in T}$, and $K_{TT}$ is the positive-definite $n \times n$ kernel matrix $(K(t, s))_{t, s \in T}$. $\mathcal{N}(\mu, \Sigma)$ denote a Gaussian distribution with mean $\mu$ and covariance matrix $\Sigma$.
\end{definition}

With the kernel $K$ of the approximate process $H$, for each $i \in \overline{1, B}$, define the kernel matrices $K_{T_iT_i}$, $K_{S^mT_i}$, and $K_{T_iS^m}$ as follows: $K_{T_iT_i} = (K(t, s))_{t \in T_i, s \in T_i}$, $K_{S^mT_i} = (K(t, s))_{t \in S^m, s \in T_i}$, and $K_{T_iS^m} = (K(t, s))_{t \in T_i, s \in S^m}$. From these, define the $|T_i|$-by-$|T_i|$ matrix $Q_{T_iT_i}: = K_{T_iS^m}(K_{S^mS^m})^{-1}K_{S^mT_i}$ for $ i \in \overline{1, B}$. Lastly, define the vector $Y$ and the joint matrix $Q^{\mathcal{C}, G}$ as follows:
\begin{equation}
Y = \begin{bmatrix} y^1 \\ \vdots \\ y^B \end{bmatrix},\ Q^{\mathcal{C}, G} = \begin{bmatrix}
        Q_{T_1T_1} & 0 & 0\\
        0 & \ddots & 0\\
        0 & 0 & Q_{T_BT_B}
\end{bmatrix}
\end{equation}

With these definitions, the function $\mathcal{L}^{max}$ can be approximated as:
\begin{equation}\label{eqn:ELBO_max_formula}
\mathcal{L}^{max}(\mathcal{C}, G, S^m) \approx \log p_\mathcal{N}(Y|0, B\sigma^2 I + Q^{\mathcal{C}, G}) - \frac{1}{2\sigma^2B}\sum_{i=1}^B \Tr(K_{T_iT_i}-Q_{T_iT_i})
\end{equation}
where $p_\mathcal{N}(X|\mu, \Sigma)$ denotes the density function of a Gaussian random variable $X$ with mean $\mu$ and covariance matrix $\Sigma$. The detailed proof for this approximation is given in the \textbf{Appendix}.

\subsection{Motion Code Learning}\label{motion_code_main}
With the core concept of \textbf{the most informative timestamps} outlined in \cref{sec:most_informative_timestamps} and approximation formula for $\mathcal{L}^{max}$ in \cref{sec:l_max_approx}, we can now describe \textbf{Motion Code} learning framework in details:

\textbf{Model And Parameters:} We approximate each stochastic process $G_k$ using a kernelized Gaussian process with a kernel function $K^{\eta_k}$, parameterized by $\eta_k$, for each $k \in \overline{1, L}$. All timestamps are normalized to the interval $[0, 1]$, and we select $m \in \mathbb{N}$ as the number of the most informative timestamps, as well as a fixed latent dimension $d \in \mathbb{N}$.

We jointly model the most informative timestamps $S^{m, k}$ for each stochastic process $G_k$ (with corresponding data collection $\mathcal{C}_k$) through a common mapping $\mathcal{G}: \mathbb{R}^d \to \mathbb{R}^m$. Specifically, we define $L$ distinct $d$-dimensional vectors $z_1, \dots, z_L \in \mathbb{R}^d$, referred to as \textbf{motion codes}, and use them to model $S^{m, k}$ as:
\begin{equation}
\widehat{S^{m, k}}:= \textbf{sigmoid}(\mathcal{G}(z_k)) \in \mathbb{R}^m
\end{equation}
where \textbf{sigmoid} is the standard sigmoid function. 

We approximate the map $\mathcal{G}$ with a linear transformation parameterized by a matrix $\Theta$, such that $\mathcal{G}(z_k) \approx \Theta z_k$. Thus, the Motion Code model involves three types of parameters:
\begin{enumerate}
    \item Kernel parameters $\eta:= (\eta_1, \cdots, \eta_L)$ to approximate underlying stochastic process $\set{G_k}_{k=1}^L$. 
    \item Motion codes $z:= (z_1, \cdots, z_L)$ with $z_i \in \mathbb{R}^d$.
    \item The joint map parameter $\Theta$ with dimension $m \times d$.
\end{enumerate}

\textbf{Training Loss Function:} The goal is to have $\widehat{S^{m, k}}$ closely approximate the true $S^{m, k}$, which maximizes $\mathcal{L}^{max}$. To achieve this, we maximize $\mathcal{L}^{max}(\mathcal{C}_k, G_k, \widehat{S^{m, k}})$ for all $k$, leading to the following loss function:
\begin{equation}\label{loss_fct}
\mathcal{U}(\eta, z, \Theta) = -\sum_{k = 1}^L \mathcal{L}^{max}(\mathcal{C}_k, G_k, \widehat{S^{m, k}}) + \lambda \sum_{k=1}^L \norm{z_k}_2^2  
\end{equation}
The first term is computed using the approximation formula for $\mathcal{L}^{max}$ in \cref{eqn:ELBO_max_formula}. The second term is a regularization term for the motion codes $z_k$, controlled by the hyperparameter $\lambda$. The full training procedure is detailed in \cref{alg:motion_code}.

\begin{algorithm}[htb]
\caption{Motion Code training algorithm}\label{alg:motion_code}
    \begin{flushleft}
        \textbf{Input:} $L$ collections of time series data $\mathcal{C}_k = \set{y^{i, k}}_{i=1}^{B_k}$, where the series $y^{i, k}$ has timestamps $T_{i, k}$, for $k \in \overline{1, L}$. Additional hyperparameters include number of the most informative timestamps $m$, motion codes dimension $d$, regularization parameter $\lambda$, max iteration $M$, and stopping threshold $\epsilon$.\\
        \textbf{Output:} Parameters $\eta, z, \Theta$ that optimize loss function $\mathcal{U}(\eta, z, \Theta)$ (see \cref{motion_code_main}).
    \end{flushleft}
    \begin{algorithmic}[1]
        \STATE Initialize $\eta$ and $z$ to be constant vectors $1$, and $\Theta$ to be the constant matrix, where each column is the arithmetic sequence between $0.1$ and $0.9$.
        \REPEAT
            \STATE Use the current parameter $\eta, z$ to calculate the predicted most informative timestamps for the $k^{th}$ stochastic process: $\widehat{S^{m, k}} = \textbf{sigmoid}(\Theta z_k)$. 
            \STATE Calculate $K^{\eta_k}_{S^{m, k}S^{m, k}}, K^{\eta_k}_{S^{m, k}T_{i, k}}, K^{\eta_k}_{T_{i, k}S^{m, k}}$ for $k \in \overline{1, L}, i \in \overline{1, B_k}$. Then calculate corresponding matrix $Q$'s, and $Q^{C, G}$ defined in \cref{sec:l_max_approx}.
            \STATE Use above calculations to compute $\mathcal{L}^{max}(\mathcal{C}_k, G_k, \widehat{S^{m, k}})$ approximated by \cref{eqn:ELBO_max_formula} via an automatic differentiation framework for each $k \in \overline{1, L}$.
            \STATE Calculate the loss $\mathcal{U}(\eta, z, \Theta)$  and its differential via automatic differentiation.
            \STATE Update parameters $(\eta, z, \Theta)$ using Limited-memory Broyden Fletcher Goldfarb Shanno (L-BFGS) algorithm \citep{L_BFGS_algo}. 
        \UNTIL{numbers of iterations exceed $M$ or training loss decreases less than $\epsilon$.}
        \STATE Output the final $(\eta, z, \Theta)$.
    \end{algorithmic}
\end{algorithm}

\subsection{Classification and Forecasting with Motion Code} \label{apply_to_timeseries}

We use the trained parameters $\eta, z, \Theta$ from \cref{alg:motion_code} to perform both time series forecasting and classification. The first step is to compute \textbf{preliminary predictions} that yield the predicted mean signal $p_k = \E[(g_k)_T] \in \mathbb{R}^{|T|}$, which forms the basis for these tasks.

\noindent\textbf{Preliminary Predictions:} For a given $k \in \overline{1, L}$, the predicted distribution of the signal vector $(g_k)_T$ is obtained by marginalizing over the signal $(g_k)_{S^{m, k}}$ at the most informative timestamps $S^{m, k}$ for process $G_k$:
\begin{equation}\label{predicted_dist_marginalization}
p((g_k)_T) = \int p((g_k)_T|(g_k)_{S^{m, k}}) \phi^*((g_k)_{S^{m, k}}) d(g_k)_{S^{m, k}}
\end{equation}
where the optimal variational distribution $\phi^*$ is defined in \cref{eqn:maximizer}. A detailed calculation of the distribution $p((g_k)_T)$ and its mean $p_k = \E[(g_k)_T]$, referred to as the \textbf{predicted mean signal}, is provided in the \textbf{Appendix}.

\textbf{Forecasting: } For the stochastic process $G_k$, the predicted mean signal $p_k = \E[(g_k)_T]$ serves as the forecast for the process.

\textbf{Classification: } To classify a series $y$ with timestamps $T$, we compute the predicted mean signal $p_k \in \mathbb{R}^{|T|}$ for each $k \in \overline{1, L}$. \textbf{Motion Code} outputs the predicted label based on the closest $p_k$, using the Euclidean distance $\norm{.}_{2, \mathbb{R}^{|T|}}$:
\begin{equation}
k_{\text{predicted}} = \argmax_k \norm{y - p_k}_{2, \mathbb{R}^{|T|}}
\end{equation}

\textbf{Time Complexity:} Matrix multiplication between an $m$-by-$m$ matrix and an $m$-by-$|T_{i, k}|$ or $|T_{i, k}|$-by-$m$ matrix is the most computationally expensive operation in \cref{alg:motion_code}. As a result, the time complexity of \cref{alg:motion_code} is $O\bigg(\sum_{k=1}^L\sum_{i=1}^{B_k}m^2|T_{i, k}| \bigg) \times M = O(m^2NM)$, where $N = \sum_{k=1}^L\sum_{i=1}^{B_k}|T_{i, k}|$ represents the total number of data points, $M$ is the maximum number of iterations, and $m$ is the number of most informative timestamps. For time series tasks, by the same argument, the cost of predicting a single mean vector $p_k$ is $O(m^2|T|)$. Thus, the cost for forecasting at timestamps $T$ is also $O(m^2|T|)$. For classification tasks across $L$ groups of time series, classifying a time series with timestamps $T$ has a complexity of $O(m^2|T||L|)$. Since $m$ is typically chosen to be small, these complexities are \textbf{approximately linear} in terms of the number of data points in the time series input.

\section{EXPERIMENTS}\label{experiment}
\subsection{Datasets}
We prepared three datasets for experimentation:

\textbf{Basic Sensor And Device Data}: Twelve publicly available time-series datasets were sourced from the UCR archive \citep{Bagnall2017-lb}, focusing on sensor and device data with corresponding ID from 1 to 12. Gaussian noise was added to simulate real-world conditions, with a standard deviation of 30\% of the maximum absolute value of the data points.

\textbf{Pronunciation Audio}: This dataset includes pronunciation audio from speakers with different accents (American, British, and Malaysian), focusing on two words: “absorptivity” and “anything.” These audio samples were sourced from publicly available recordings \citep{forvo}.

\begin{table*}[htb]
\centering
\caption{\label{classify_benchmarks_basic}Classification Accuracy (Percentage) For 7 Time Series Algorithms On Noisy Basic Sensor Datasets. The Highest Accuracy Is Highlighted In Red, The Second Highest In Blue.}
\begin{tabular}{|c|c|c|c|c|c|c|c|}
\hline
ID & DTW & TSF & RISE & BOSS & BOSS-E & catch22 & \parbox{30pt}{\centering Motion Code} \\[6pt]
\hline
1 & 54.23 & 61.22 & \color{blue}{65.6} & 47.81 & 41.69 & 55.39 & \color{red}{66.47} \\
\hline
2 & 54.47 & 58.07 & \color{blue}{59.35} & 50.06 & 58.42 & 52.85 & \color{red}{66.55} \\
\hline
3 & 52.42 & \color{blue}{54.28} & 53.79 & 50 & 50.95 & 53.58 & \color{red}{70.25} \\
\hline
4 & 92.7 & \color{red}{99.05} & \color{blue}{98.73} & 87.62 & 93.33 & 98.41 & 91.11 \\
\hline
5 & \color{blue}{57.98} & 57.14 & 42.02 & 52.1 & \color{blue}{57.98} & 45.38 & \color{red}{70.59} \\
\hline
6 & \color{red}{100} & \color{red}{100} & 83.13 & 99.2 & 91.97 & 95.98 & \color{red}{100} \\
\hline
7 & 57.05 & \color{blue}{68.71} & 65.79 & 52.77 & 53.26 & 55.88 & \color{red}{72.5} \\
\hline
8 & 21.92 & \color{blue}{28.77} & 26.03 & 12.33 & \color{blue}{28.77} & 24.66 & \color{red}{31.51} \\
\hline
9 & 56.47 & 61.1 & \color{blue}{61.5} & 53.83 & 53.51 & 57.19 & \color{red}{72.68} \\
\hline
10 & 78.33 & \color{blue}{92.22} & 85.56 & 65.56 & 77.22 & 80 & \color{red}{92.78} \\
\hline
11 & 63.27 & 67.68 & \color{blue}{69.78} & 48.06 & 61.91 & 64.43 & \color{red}{75.97} \\
\hline
12 & 78.25 & \color{red}{83.67} & 79.79 & 12.23 & 74.87 & 47.38 & \color{blue}{80.18} \\
\hline
\end{tabular}
\end{table*}

\begin{table*}[htb]
\centering
\caption{\label{classify_benchmarks_basic_2}Classification Accuracy (Percentage) For 7 Time Series Algorithms On Noisy Basic Sensor Datasets. ``Error" Indicates Failure To Run.}
\begin{tabular}{|c|c|c|c|c|c|c|c|} 
\hline
ID & \parbox{30pt}{\centering Shape-let} & Teaser & SVC & \parbox{30pt}{\centering LSTM-FCN} & Rocket & \parbox{30pt}{\centering Hive-Cote 2} & \parbox{30pt}{\centering Motion Code} \\[6pt]
\hline
1 & 61.22 & Error & 56.27 & \color{red}{66.47} & \color{blue}{62.97} & 61.52 & \color{red}{66.47} \\
\hline
2 & 52.61 & Error & 49.71 & 53.54 & \color{blue}{56.79} & 55.75 & \color{red}{66.55} \\
\hline
3 & 50 & 50.11 & 50 & 50 & 52.67 & \color{blue}{58.18} & \color{red}{70.25} \\
\hline
4 & 74.6 & 89.52 & 52.38 & 52.38 & 90.48 & \color{red}{98.73} & \color{blue}{91.11} \\
\hline
5 & 57.14 & 53.78 & 45.38 & 57.98 & 58.82 & \color{blue}{59.66} & \color{red}{70.59} \\
\hline
6 & 44.58 & \color{red}{100} & 85.94 & \color{red}{100} & 46.18 & \color{red}{100} & \color{red}{100} \\
\hline
7 & 62.49 & 63.17 & 49.85 & 61.61 & 70.36 & \color{red}{72.98} & \color{blue}{72.5} \\
\hline
8 & 20.55 & 21.92 & 26.03 & 17.81 & 27.4 & \color{red}{32.88} & \color{blue}{31.51} \\
\hline
9 & 47.76 & Error & 50.64 & 56.55 & \color{blue}{68.85} & 56.95 & \color{red}{72.68} \\
\hline
10 & 74.44 & 51.11 & 77.78 & 68.33 & 87.22 & \color{blue}{90} & \color{red}{92.78} \\
\hline
11 & 69.36 & 68.84 & 61.7 & 63.27 & 74.71 & \color{red}{78.49} & \color{blue}{75.97} \\
\hline
12 & 49.5 & 26.52 & Error & 12.67 & \color{red}{83.45} & 78.5 & \color{blue}{80.18} \\
\hline
\end{tabular}
\end{table*}

\textbf{Parkinson's Disease Sensor Data}: The Parkinson data are derived from the Clinician Input Study (CIS-PD) \citep{pd1,pd2}, a 6-month project using Apple Watch devices to monitor patients during clinic visits and at home. For two days before each clinic visit, patients reported symptoms every 30 minutes, focusing on medication state and tremor severity. The accelerometer data was segmented into 20-minute intervals (10 minutes before and after each symptom report). These Parkinson data were obtained from the Biomarker \& Endpoint Assessment to Track Parkinson’s disease DREAM Challenge. For up-to-date information on the study, visit \href{https://www.synapse.org/Synapse:syn20825169/wiki/600898}{https://www.synapse.org/Synapse:syn20825169/wiki/600898}.

We used two experimental settings for Parkinson's monitoring. The first tracks patients fully on medication state, distinguishing between no tremor and mild tremor to assess whether the patient has fully recovered or is still symptomatic. The second setting adds a third category for moderate to severe tremor, independent of medication state, aiming to capture broader tremor patterns, including cases where symptoms persist despite medication. This offers a more comprehensive assessment of tremor severity beyond recovery stages.

\subsection{Experimental Setups}
Motion Code was applied to three datasets: 12 basic datasets with added noise, pronunciation audio data, and Parkinson's sensor data, focusing on classification tasks. Forecasting was performed on the basic datasets and the audio dataset. All experiments were run on an Nvidia A100 GPU.

\textbf{Data Preprocessing}: For the Parkinson's dataset, we downsampled each segment by averaging per second, calculated the absolute differences between consecutive points, and applied an exponential moving average filter. We interpolated the data to 1,600 points for benchmark algorithms that require same-length time series, though Motion Code can handle misaligned data directly without the need for interpolation. More details are given in the \textbf{Appendix}.

\textbf{Kernel Choice}: We used a spectral kernel defined as $K^{\eta}(t, s):= \sum_{j=1}^J \alpha_j \exp(\scalebox{1.2}{-}0.5\beta_j|t-s|^2)$ with parameters $\eta = (\alpha_1, \cdots, \alpha_J, \beta_1, \cdots, \beta_J)$.

\textbf{Hyperparameters}: For experiments, we set $d = 2$, $\lambda = 1$, $\epsilon = 10^{-5}$, and $M = 10$. For basic and pronunciation audio datasets, we selected 10 most informative timestamps ($m=10$), and 1 kernel components ($J = 1$). For Parkinson’s disease (PD), we used $m= 6$, $J=2$ for the first setting, and $m=12$, $J=2$ for the second setting.

\subsection{Evaluation on Time Series Classification}
We compared Motion Code's performance on time series classification against 12 algorithms: \textbf{DTW} \citep{Jeong2011-sn}, \textbf{TSF} \citep{Deng2013-ap}, \textbf{RISE} \citep{Lines2016-rv}, \textbf{BOSS} \citep{Schafer2015-vw}, \textbf{BOSS-E} \citep{Schafer2015-vw}, \textbf{catch22} \citep{Lubba2019-vg}, \textbf{Shapelet} \citep{Bostrom2017-kx}, \textbf{Teaser} \citep{Schafer2020-ys}, \textbf{SVC} \citep{Loning2019-so}, \textbf{LSTM-FCN} \citep{Karim2019-cj}, \textbf{Rocket} \citep{Dempster2020-ck}, and \textbf{Hive-Cote 2} \citep{Middlehurst2021-ja}. We evaluated performance based on classification accuracy (measured in percentage).

As shown in \cref{classify_benchmarks_basic} and \cref{classify_benchmarks_basic_2}, Motion Code outperforms other algorithms on more than half of the noisy basic datasets and consistently ranks in the top 2, only behind the ensemble model Hive-Cote 2. This demonstrates the robustness of our method in handling collections of noisy time series.

\begin{table*}[htb]\small
\centering
\caption{\label{classify_benchmarks_realworld_1}Classification Accuracy For 7 Time Series Algorithms On Pronunciation Audio And Parkinson Data.}
\begin{tabular}{|c|c|c|c|c|c|c|c|} 
\hline
 \parbox{90pt}{\centering Data sets} & \parbox{25pt}{\centering Shape-let} & \parbox{25pt}{\centering Teaser} & \parbox{25pt}{\centering SVC} & \parbox{25pt}{\centering LSTM-FCN} & \parbox{25pt}{\centering Rocket} & \parbox{25pt}{\centering Hive-Cote 2} & \parbox{25pt}{\centering \textbf{Motion Code}} \\[6pt]
\hline
Pronunciation Audio & 68.75 & Error & 62.5 & 56.25 & {\color{blue}75} & {\color{blue}75} & {\color{red}87.5} \\
\hline
Parkinson setting 1 & 52.80 & 59.94 & {\color{blue}63.96} & 43.48 & 61.49 & 59.63 & {\color{red}70.81}\\
\hline
Parkinson setting 2 & 44.99 & 37.53 & 48.02 & 24.01 & {\color{blue}51.52} & 50.82 & {\color{red}54.31}\\
\hline
\end{tabular}
\end{table*}

\begin{table*}[htb]\small
\centering
\caption{\label{classify_benchmarks_realworld_2}
Classification Accuracy For 7 Time Series Algorithms On Pronunciation Audio And Parkinson Data.}
\begin{tabular}{|c|c|c|c|c|c|c|c|}
\hline
\parbox{90pt}{\centering Data sets} & \parbox{25pt}{\centering DTW} & \parbox{25pt}{\centering TSF} & \parbox{25pt}{\centering RISE} & \parbox{25pt}{\centering BOSS} & \parbox{25pt}{\centering BOSS-E} & \parbox{25pt}{\centering catch22} & \parbox{25pt}{\centering \textbf{Motion Code}} \\[6pt]
\hline
Pronunciation Audio & 50 & {\color{red}87.5} & 62.5 & {\color{blue}68.75} & 62.5 & 50 & {\color{red}87.5} \\
\hline
Parkinson setting 1 & 63.35 & 63.98 & {\color{red}70.81} & 61.80 & 65.53 & {\color{blue}68.94} & {\color{red}70.81}\\
\hline
Parkinson setting 2 & 43.12 & 51.98 & {\color{blue}53.61} & 45.92 & 36.83 & 51.52 & {\color{red}54.31}\\
\hline
\end{tabular}
\end{table*}

For real-world datasets, \cref{classify_benchmarks_realworld_1} and \cref{classify_benchmarks_realworld_2} show competitive performance from Motion Code compared to 12 other algorithms, highlighting its effectiveness in handling noise inherent in real-world data.

\subsection{Evaluation on Time Series Forecasting}
For forecasting, each dataset was split into two parts: 80\% of the data points were used for training, while the remaining 20\% of future data points were reserved for testing. For Motion Code, we generated a single prediction for all series within the same collection. We selected 5 algorithms as baselines for comparison: Exponential Smoothing \citep{Holt2004-oa}, ARIMA \citep{Malki2021-ec}, State Space Model \citep{Durbin2012-jm}, TBATS \citep{De_Livera2011-av}, and Last Seen, a basic method that uses previous values to predict the next time steps. 

\begin{table*}[htb]\small
\centering
\caption{\label{forecast_benchmarks} Average Root Mean-Square Error (RMSE) For 6 Time Series Forecasting Algorithms.}

\begin{tabular}{|c|c|c|c|c|c|c|} 
\hline
ID &  \parbox{45pt}{\centering Exp. Smoothing} & \parbox{45pt}{\centering ARIMA} &  \parbox{45pt}{\centering State space} & \parbox{45pt}{\centering Last seen} & \parbox{45pt}{\centering TBATS} & \parbox{45pt}{\centering \textbf{Motion Code}} \\ [6pt]
\hline
1 & Error & 1079 & 775.96 & 723.1 & {\color{blue}633.04} & {\color{red}518.49} \\
\hline
2 & 0.34 & 0.43 & 1.58 & {\color{blue}0.19} & {\color{red}0.17} & 0.27 \\
\hline
3 & 0.88 & 0.58 & 0.93 & {\color{blue}0.57} & {\color{red}0.56} & 0.74 \\
\hline
4 & 60.38 & 128.44 & 59.83 & {\color{blue}41.51} & {\color{red}20.94} & 417.94 \\
\hline
5 & 1117 & 3386 & 730.51 & {\color{red}497.3} & {\color{blue}560.88} & 648.27 \\
\hline
6 & 0.043 & 0.095 & 0.25 & {\color{red}0.019} & {\color{blue}0.02} & 0.048 \\
\hline
7 & Error & 2.02 & 2.37 & 1.24 & {\color{blue}0.96} & {\color{red}0.67} \\
\hline
8 & 1.7 & 2.85 & 1.7 & {\color{blue}1.35} & {\color{blue}1.35} & {\color{red}1.08} \\
\hline
9 & 1.11 & 1.52 & 1.09 & 1.01 & {\color{blue}0.88} & {\color{red}0.82} \\
\hline
10 & 3.38 & 4.85 & 4.41 & 1.77 & {\color{blue}1.72} & {\color{red}1.15} \\
\hline
11 & 2.79 & 2.01 & 3.21 & {\color{red}1.39} & {\color{blue}1.52} & 2.26 \\
\hline
12 & 4.37 & 5 & 4.45 & {\color{red}0.98} & {\color{blue}1.44} & {\color{red}0.98} \\
\hline
Audio & 0.087 & 0.27 & 0.086 & 0.1 & {\color{red}0.059} & {\color{blue}0.085} \\
\hline
\end{tabular}
\end{table*}

We ran the 5 baseline algorithms with individual predictions and compared the results, as shown in \cref{forecast_benchmarks}. Despite not making individual predictions for each series, Motion Code outperformed other methods in the majority of datasets. 

\textbf{Code: } The implementation is available at \href{https://github.com/mpnguyen2/motion\_code}{https://github.com/mpnguyen2/motion\_code}.

\section{MOTION CODE'S BENEFITS}\label{discussion}
\subsection{Interpretable Features}

Despite having several noisy time series that deviate from the common mean, the points at most informative timestamps $S^{m, k}$ form a skeleton approximation of the underlying stochastic process. All the important twists and turns are constantly observed by the corresponding points at important timestamps (see \cref{fig:most_informative_timestamp_prediction}). Those points create a feature that helps visualize the underlying dynamics with explicit global behaviors such as increasing, decreasing, staying still, unlike the original complex time series collections with no visible common patterns among series.

\begin{figure}[htb]
  \centering
  \subfloat[Absorptivity]{\includegraphics[scale=0.28]{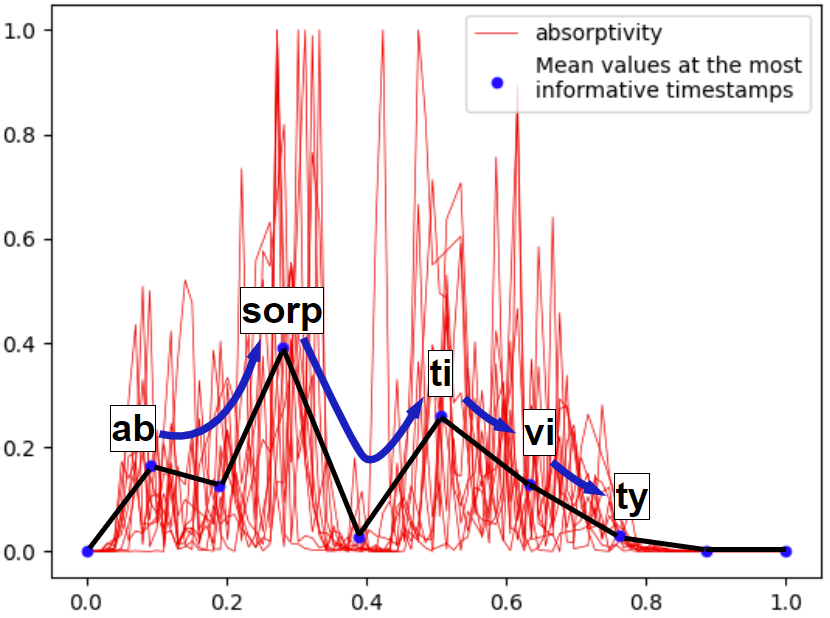}\label{fig:absorptivity}}
  \hfil
  \subfloat[Anything]{\includegraphics[scale=0.28]{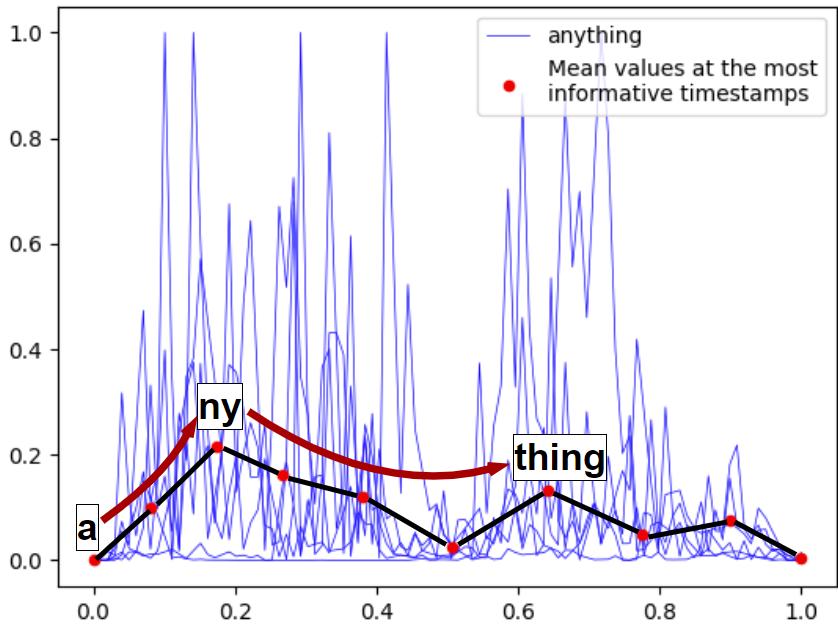}\label{fig:anything}}
  \caption{(a) And (b): Most Informative Timestamps For The Pronunciation Of \textbf{Absorptivity} And \textbf{Anything}, Highlighting Key Linguistic Components.} \label{fig:sound_interpretable}
\end{figure}

\textbf{Pronunciation Audio}: For pronunciation audio data, where speakers from different nationalities pronounce complex words, Motion Code highlights key linguistic features. For ``absorptivity" (ab-sorp-ti-vi-ty), the most informative timestamps align with significant phonetic components, identifying emphasis on ``ab" and ``sorp" followed by a notable silent pause before proceeding to ``ti" and then ``vi-ty" (see \cref{fig:absorptivity}). Similarly, for ``anything", it captures a strong vocal raise on ``a-ny" and emphasis on ``thing", preserving core pronunciation patterns across accents despite variations (see \cref{fig:anything}). This ability to focus on key moments reveals common speech dynamics shared across accents.

\textbf{Parkinson's Disease Data}: When tracking normal movement, the series appears random with no clear pattern. This reflects the unpredictable nature of normal motion, where no consistent behavior or tremor can be observed (see \cref{fig:normal}). In contrast, for patients with light tremor, the extracted timestamps reveal a more consistent, repetitive pattern, characterized by slight oscillations that correspond to minor, controlled hand swings. These small fluctuations, captured at key timestamps, represent typical behavior in light tremor (see \cref{fig:light_tremor}). For more severe tremors, the timestamps highlight a progression from smaller, repeated movements to larger, more exaggerated swings. Initially, the differences between consecutive data points are minimal, but as the tremor worsens, the fluctuations become more pronounced, with larger variations visible at critical timestamps (see \cref{fig:noticeable_tremor}). This interpretable feature allows us to track the severity and progression of tremors over time, offering valuable insights into patient's conditions.

\begin{figure}[htb]
  \centering
  \subfloat[Normal]{\includegraphics[scale=0.3]{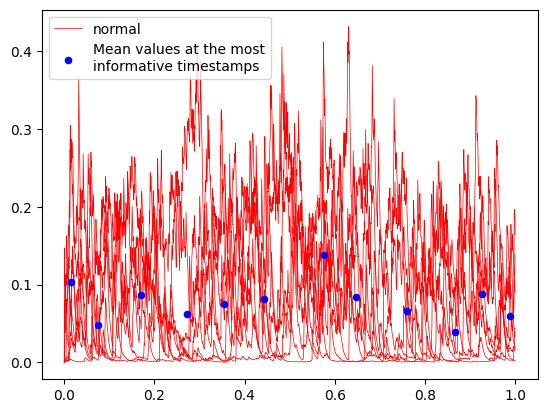}\label{fig:normal}}
  \hfil
  \subfloat[Light Tremor]{\includegraphics[scale=0.3]{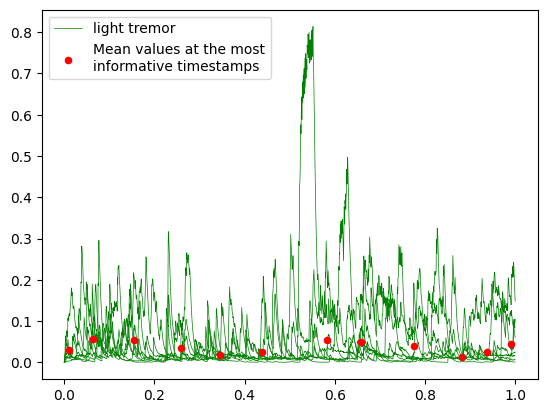}\label{fig:light_tremor}}
  \hfil
  \subfloat[Noticeable Tremor]{\includegraphics[scale=0.3]{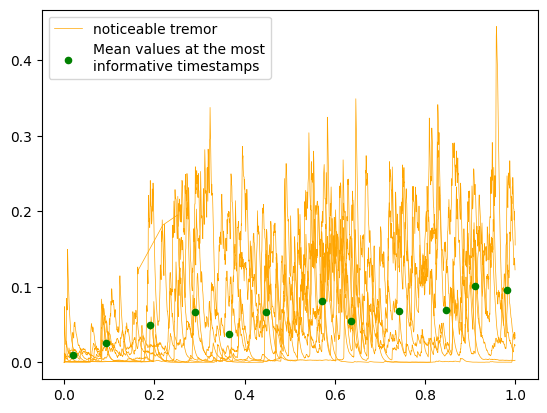}\label{fig:noticeable_tremor}}
  \caption{Interpretable Features Showing Tremor Patterns And Disease Stages For Parkinson Data: (a) Normal, (b) Light Tremor, And (c) Noticeable Tremor.}
  \label{fig:parkinson_interpretable}
\end{figure}

\subsection{Uneven Length and Missing Data}
Motion Code processes each data point and its timestamp independently, allowing the algorithm to handle time series with different timestamps. Despite the time series having uneven lengths and out-of-sync timestamps, Motion Code maintains accurate skeleton approximations (see \cref{fig:uneven_lengths}), demonstrating its effectiveness with incomplete and varying-length data.

For Parkinson's dataset, Motion Code efficiently handles out-of-sync timestamps and missing values. The time series from wearable sensors vary in length from 200 to 1660 points, with intermediate lengths such as 500 and 1000 points. Traditional methods often require interpolation to standardize these lengths, which can distort the data, especially when dealing with large disparities. Motion Code bypasses this need, processing time series of different lengths directly and learning across classes without interpolation, preserving the original data's integrity.

This capability is particularly useful for monitoring Parkinson's disease, where tremors and bradykinesia fluctuate, and sensor readings are irregular due to patient activities. Unlike other methods that struggle with asynchronous data, Motion Code treats each reading as part of an underlying stochastic process, enabling it to handle noisy, incomplete, and unsynchronized data efficiently. This eliminates the need for strict time-aligned monitoring, allowing patients to maintain natural schedules while ensuring accurate symptom tracking. Clinicians also benefit from clear, actionable insights, improving their ability to monitor disease progression and make timely interventions.

\section{CONCLUSION}
In this work, we developed an integrated framework called \textbf{Motion Code}, utilizing variational inference and sparse stochastic process modeling. Unlike most existing methods focusing on either classification or forecasting, Motion Code performs both tasks simultaneously across diverse time series collections. Our model demonstrates robustness to noise and consistently achieves competitive performance against other leading time series algorithms. As discussed in \cref{discussion}, \textbf{Motion Code} offers interpretable features that capture the core dynamics of the underlying stochastic process. This is especially useful in domains like Parkinson’s disease monitoring, where understanding key patterns offers actionable insights for clinicians. Additionally, it handles varying-length time series and missing data, challenges that many other methods struggle with. In future work, we aim to extend \textbf{Motion Code} by incorporating non-Gaussian approximation to adapt to time series from different application domains.

\begin{ack}
\textbf{Funding information:} This research was supported in part by a grant from the NIH-DK129979, in part from the Peter O'Donnell Foundation, the Michael J. Fox Foundation, Jim Holland-Backcountry Foundation, and in part from a grant from the Army Research Office accomplished under Cooperative Agreement Number W911NF-19-2-0333.

\textbf{Parkinson data:} These data were generated by participants of The Michael J. Fox Foundation for Parkinson’s Research Mobile or Wearable Studies. They were obtained as part of the Biomarker \& Endpoint Assessment to Track Parkinson’s Disease DREAM Challenge (through Synapse ID syn20825169) made possible through partnership of The Michael J. Fox Foundation for Parkinson’s Research, Sage Bionetworks, and BRAIN Commons.
\end{ack}

\bibliographystyle{plainnat}
\bibliography{citation}

\newpage
\appendix
\section{MATHEMATICAL THEORY AND PROOF}
In this section, we provide the details deferred from the main paper, including the approximation of the $\mathcal{L}^{max}$ function (Section 2.3) and the derivation of the signal distribution $p((g_k)_T)$ (Section 2.5).

\subsection{Approximation of $\mathcal{L}^{max}$}
We aim to prove the approximation formula for $\mathcal{L}^{max}$:
\begin{equation}
\mathcal{L}^{max}(\mathcal{C}, G, S^m) \approx \log p_\mathcal{N}(Y|0, B\sigma^2 I + Q^{\mathcal{C}, G}) - \frac{1}{2\sigma^2B}\sum_{i=1}^B \Tr(K_{T_iT_i}-Q_{T_iT_i})
\end{equation}

We now state the approximation result as the following lemma:

\begin{lemma}\label{lemma:ELBO_max_formula}
Let $G = \set{g(t)}_{t \geq 0}$ be a stochastic process with the underlying signal $g$. Assume the data collection $\mathcal{C}$ for the process $G$ consists of $B$ noisy time series $\set{y^i}_{i=1}^B$ with data points $(y^i)_{T_i}$, where $(y^i)_{T_i} \sim \mathcal{N}(g_{T_i}, \sigma I_{|T_i|})$. This assumption, from Section 2.1, implies the data points have Gaussian noise with variance $\sigma$ around the signal $g$. We approximate $G$ by a Gaussian process with mean function $\mu:\mathbb{R} \to \mathbb{R}$ and kernel $K: \mathbb{R} \times \mathbb{R} \to \mathbb{R}$. We further assume the mean vectors $\mu$ can be approximately rescaled by $K$: $\mu_T \approx K_{TS}K_{SS}^{-1}\mu_S$.

Let $S^m$ be an $m$-element set of timestamps. Recall the kernel matrices $K_{T_iT_i}$, $K_{S^mT_i}$, and $K_{T_iS^m}$, defined as follows: $K_{T_iT_i} = (K(t, s))_{t \in T_i, s \in T_i}$, $K_{S^mT_i} = (K(t, s))_{t \in S^m, s \in T_i}$, and $K_{T_iS^m} = (K(t, s))_{t \in T_i, s \in S^m}$. Also, recall the $|T_i|$-by-$|T_i|$ matrix $Q_{T_iT_i}: = K_{T_iS^m}(K_{S^mS^m})^{-1}K_{S^mT_i}$ for $ i \in \overline{1, B}$. Additionally, recall the data vector $Y$ and the joint matrix $Q^{\mathcal{C}, G}$:
\begin{equation}
Y = \begin{bmatrix} y^1 \\ \vdots \\ y^B \end{bmatrix},\ Q^{\mathcal{C}, G} = \begin{bmatrix}
        Q_{T_1T_1} & 0 & 0\\
        0 & \ddots & 0\\
        0 & 0 & Q_{T_BT_B}
\end{bmatrix}
\end{equation}

Then $\mathcal{L}^{max}$ defined in Section 2.2 has the approximate form:
\begin{equation}\label{eqn:ELBO_max_formula_1}
\mathcal{L}^{max}(\mathcal{C}, G, S^m) \approx \log p_\mathcal{N}(Y|0, B\sigma^2 I + Q^{\mathcal{C}, G}) - \frac{1}{2\sigma^2B}\sum_{i=1}^B \Tr(K_{T_iT_i}-Q_{T_iT_i})
\end{equation}
where $p_\mathcal{N}(X|\mu, \Sigma)$ denotes the density function of a Gaussian random variable $X$ with mean $\mu$ and covariance matrix $\Sigma$. 

Furthermore, the optimal variational distribution $\phi^* = \argmax_\phi \mathcal{L}(\mathcal{C}, G, S^m, \phi)$ (see Section 2.2) is a Gaussian distribution of the form:
\begin{equation}\label{eqn:optimal_variational_distribution}
\phi^*(g_{S^m}) = \mathcal{N}\Bigg(\sigma^{-2}K_{S^mS^m}\Sigma\left(\frac{1}{B}\sum_{i=1}^B K_{S^mT_i}y^i\right), K_{S^mS^m}\Sigma K_{S^mS^m} \Bigg)
\end{equation}
where $\Sigma = \Lambda^{-1}$ with $\Lambda := K_{S^mS^m} + \dfrac{\sigma^{-2}}{B}\sum_{i=1}^B K_{S^mT_i}K_{T_iS^m}$.
\end{lemma}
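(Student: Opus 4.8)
The plan is to recognize \cref{eqn:ELBO_def} as a Titsias-style variational bound \citep{pmlr-v5-titsias09a}, but generalized by (i) averaging over $B$ realizations and (ii) allowing the inducing-point locations to be arbitrary timestamps $S^m$. First I would fix $i$ and integrate out $g_{T_i}$ from the inner integrand. Since $p(g_{T_i}\mid g_{S^m})$ is Gaussian in $g_{T_i}$ (by the Gaussian-process approximation of $G$, with conditional mean $K_{T_iS^m}K_{S^mS^m}^{-1}g_{S^m}$ plus a $\mu$-correction that, under the stated assumption $\mu_T\approx K_{TS}K_{SS}^{-1}\mu_S$, collapses so that the effective conditional mean is $K_{T_iS^m}K_{S^mS^m}^{-1}g_{S^m}$, and conditional covariance $K_{T_iT_i}-Q_{T_iT_i}$), and since $\log p(y^i\mid g_{T_i})$ is quadratic in $g_{T_i}$ (the Gaussian noise model $y^i\sim\mathcal{N}(g_{T_i},\sigma I)$), the inner $dg_{T_i}$ integral is a Gaussian expectation of a quadratic. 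Carrying it out gives
\begin{equation}
\int p(g_{T_i}\mid g_{S^m})\log p(y^i\mid g_{T_i})\,dg_{T_i} = \log \mathcal{N}\!\left(y^i \,\middle|\, K_{T_iS^m}K_{S^mS^m}^{-1}g_{S^m},\ \sigma I_{|T_i|}\right) - \frac{1}{2\sigma}\Tr\!\left(K_{T_iT_i}-Q_{T_iT_i}\right),
\end{equation}
the familiar "trace penalty" correction. The remaining object is then $\frac{1}{B}\sum_i \int \phi(g_{S^m})\log\frac{\tilde p(y^i\mid g_{S^m})\,p(g_{S^m})}{\phi(g_{S^m})}dg_{S^m}$ minus the averaged trace term, where $\tilde p(y^i\mid g_{S^m})$ is the Gaussian just obtained.

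Next I would maximize over $\phi$. Writing the $\phi$-dependent part as $\frac{1}{B}\sum_i\bigl(\mathbb{E}_\phi[\log\tilde p(y^i\mid g_{S^m})]\bigr) - \mathrm{KL}(\phi\,\|\,p(g_{S^m}))$ and noting that all the $y^i$-terms are quadratic in $g_{S^m}$, the functional is (up to constants) $\mathbb{E}_\phi[\log \psi(g_{S^m})] - \mathrm{KL}(\phi\|p)$ for a Gaussian-shaped $\psi$; the standard variational argument (equivalently, completing the square / recognizing the unnormalized optimal $\phi^*\propto p(g_{S^m})\exp(\frac1B\sum_i\log\tilde p(y^i\mid g_{S^m}))$) shows $\phi^*$ is Gaussian and that $\max_\phi$ of the bracket equals $\log$ of the normalizing constant, i.e. $\log\int p(g_{S^m})\prod$-like term. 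I would then read off $\phi^*$ by collecting the quadratic and linear coefficients in $g_{S^m}$: the precision becomes $K_{S^mS^m}^{-1} + \frac{\sigma^{-2}}{B}\sum_i K_{S^mS^m}^{-1}K_{S^mT_i}K_{T_iS^m}K_{S^mS^m}^{-1}$, which after conjugating by $K_{S^mS^m}$ yields exactly $\Lambda = K_{S^mS^m} + \frac{\sigma^{-2}}{B}\sum_i K_{S^mT_i}K_{T_iS^m}$ and hence the covariance $K_{S^mS^m}\Sigma K_{S^mS^m}$ and mean $\sigma^{-2}K_{S^mS^m}\Sigma(\frac1B\sum_i K_{S^mT_i}y^i)$ claimed in \cref{eqn:optimal_variational_distribution}.

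For the value $\mathcal{L}^{max}$ itself, the maximized bracket equals $\log$ of a Gaussian normalizer in the variables $y^1,\dots,y^B$. This is where the averaging factor $\frac1B$ requires care and is, I expect, the main obstacle: because each $\log\tilde p(y^i\mid g_{S^m})$ enters with weight $1/B$, the effective likelihood seen by $g_{S^m}$ is $\prod_i \tilde p(y^i\mid g_{S^m})^{1/B}$, whose normalizer is \emph{not} literally a joint Gaussian density of the stacked vector $Y$ with covariance $B\sigma^2 I + Q^{\mathcal{C},G}$. I would reconcile this by the approximation the lemma explicitly allows: treating the geometric-mean pseudo-likelihood as (proportional to) a Gaussian in $Y$, one finds its normalizer matches, up to the averaged trace penalty and additive constants, $\log p_\mathcal{N}(Y\mid 0, B\sigma^2 I + Q^{\mathcal{C},G})$ — the block structure of $Q^{\mathcal{C},G}$ reflecting that, conditioned on $g_{S^m}$, the $y^i$ are independent with covariances $\sigma I$ and the cross-$i$ coupling comes only through the shared low-rank $Q$ blocks, while the $B\sigma^2$ inflation absorbs the $1/B$ down-weighting. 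I would present this last identification carefully, isolating exactly which step is the "$\approx$" (the Gaussian-ization of the averaged log-likelihood together with the $\mu_T\approx K_{TS}K_{SS}^{-1}\mu_S$ rescaling), so that all other steps are exact. Combining the exact inner integration, the exact $\phi$-optimization, and this final approximate normalizer identity yields \cref{eqn:ELBO_max_formula_1}.
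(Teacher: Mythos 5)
Your proposal follows essentially the same route as the paper's proof: integrate out $g_{T_i}$ to obtain the expected log-likelihood plus the $\Tr(K_{T_iT_i}-Q_{T_iT_i})$ penalty (the Titsias step), identify the optimal variational distribution $\phi^*\propto p(g_{S^m})\prod_i \tilde p(y^i\mid g_{S^m})^{1/B}$ and read off its Gaussian mean and covariance (the paper reaches this via Jensen's inequality and its tightness condition, which is equivalent to your KL/normalizer argument), and then identify the maximized value with $\log p_\mathcal{N}(Y\mid 0, B\sigma^2 I + Q^{\mathcal{C},G})$ minus the averaged trace term. Your explicit isolation of the geometric-mean-pseudo-likelihood-to-Gaussian identification as the "$\approx$" step is, if anything, more careful than the paper, which asserts that identity without comment.
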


\begin{proof}
Define the conditional mean signal vector $\alpha_i = \E[g_{T_i}|g_{S^m}]$. From the rescaled mean signals approximation, the conditional distribution of $g_T$ given $g_S$ is Gaussian with the following mean and variance:
\begin{equation}\label{gaussian_process_conditional_formula}
p(g_T|g_S, T, S) = \mathcal{N}(K_{TS}K_{SS}^{-1} g_S, K_{TT} - K_{TS} K_{SS}^{-1} K_{ST})
\end{equation}
As a result, $\alpha_i = K_{T_iS^m}(K_{S^mS^m})^{-1}g_{S^m}$. Then, following the derivation from \citep{pmlr-v5-titsias09a}, individual terms in Definition 1 can be approximated as follows:
\begin{align*}
&\int p(g_{T_i}|g_{S^m})\phi(g_{S^m}) \log \frac{p(y^i|g_{T_i})p(g_{S^m})}{\phi(g_{S^m})} dg_{T_i}dg_{S^m} \\
&= \int \phi(g_{S^m}) \bigg( \int p(g_{T_i}|g_{S^m}) \log p(y^i|g_{T_i})dg_{T_i} + \log \frac{p(g_{S^m})}{\phi(g_{S^m})} \bigg) dg_{S^m} \\
&\approx \int \phi(g_{S^m}) \bigg( \log p_\mathcal{N}(y^i|\alpha_i, \sigma I_{|T_i|}) - \frac{1}{2\sigma^2} \Tr(K_{T_iT_i} - Q_{T_iT_i}) + \log \frac{p(g_{S^m})}{\phi(g_{S^m})} \bigg)dg_{S^m} \\
&= \int \phi(g_{S^m}) \log \frac{p_\mathcal{N}(y|\alpha_i, \sigma I_{|T_i|})p(g_{S^m})}{\phi(g_{S^m})} dg_{S^m} - \frac{1}{2\sigma^2} \Tr(K_{T_iT_i} - Q_{T_iT_i})
\end{align*}

Let $A$ be the combined mean signal vector $A:=\begin{bmatrix} \alpha_1 \\ \vdots \\ \alpha_B \end{bmatrix}$. Using the above approximation for individual terms, we upper-bound the function $\mathcal{L}(S, G, T^m, \phi)$ (see Definition 1 in Section 2.2) as follows:
\begin{align*}
&\mathcal{L}(S, G, T^m, \phi) \\
&\approx \sum_{i=1}^B \frac{1}{B} \int \phi(g_{S^m}) \log \frac{p_\mathcal{N}(y^i|g_{T_i}, \sigma^2I)p(g_{S^m})}{\phi(g_{S^m})} dg_{S^m} -\frac{1}{2\sigma^2B}\sum_{i=1}^B \Tr(K_{T_iT_i}-Q_{T_iT_i}) \\
&= \int \phi(g_{S^m}) \log\Bigg(\Bigg(\prod_i p_\mathcal{N}(y^i|\alpha_i, \sigma^2I)\Bigg)^{1/B} \frac{p(g_{S^m})}{\phi(g_{S^m})}\Bigg) dg_{S^m} -\frac{1}{2\sigma^2B}\sum_{i=1}^B \Tr(K_{T_iT_i}-Q_{T_iT_i})\\
&\leq \log \int \Bigg(\prod_{i=1}^B p_\mathcal{N}(y^i|\alpha_i, \sigma^2I)\Bigg)^{1/B} p(g_{S^m}) dg_{S^m} - \frac{1}{2\sigma^2B}\sum_{i=1}^B \Tr(K_{T_iT_i}-Q_{T_iT_i}) \\
&= \log \int p_\mathcal{N}(Y|A, B\sigma^2)p(g_{S^m}) dg_{S^m} - \frac{1}{2\sigma^2B}\sum_{i=1}^B \Tr(K_{T_iT_i}-Q_{T_iT_i})\\
&= \log p_\mathcal{N}(Y|0, B\sigma^2 I + Q^{\mathcal{C}, G}) - \frac{1}{2\sigma^2B}\sum_{i=1}^B \Tr(K_{T_iT_i}-Q_{T_iT_i})
\end{align*}
The only inequality for this bound is due to Jensen inequality. This upper-bound no longer depends on the variational distribution $\phi$ and only depends on the timestamps in $S^m$. As a result, by definition of $\mathcal{L}^{max}$, we obtain the \cref{eqn:ELBO_max_formula_1}. Moreover, for this bound, the equality holds when:
\begin{align*}
\phi^*(g_{S^m}) &\propto \prod_{i=1}^B p_\mathcal{N}(y^i|\alpha_i, \sigma^2I)^{1/B}p(g_{S^m}) \\
&\propto \exp\Bigg( \frac{\sigma^{-2}}{B} \sum_{i=1}^B  \Bigg( (y^i)^T K_{T_iS^m} (K_{S^mS^m})^{-1}g_{S^m} \Bigg) - \frac{1}{2} (g_{S^m})^T \times \\
&\bigg(\frac{\sigma^{-2}}{B}\sum_{i=1}^B \bigg( (K_{S^mS^m})^{-1}K_{S^mT_i}K_{T_iS^m}(K_{S^mS^m})^{-1} \bigg) + (K_{S^mS^m})^{-1} \bigg) \times g_{S^m} \Bigg) \nonumber 
\end{align*}
Hence, $\phi^*$ is (approximately) a Gaussian distribution with the following mean and variance:
\begin{equation}
\phi^*(g_{S^m}) = \mathcal{N}\Bigg(\sigma^{-2}K_{S^mS^m}\Lambda\left(\frac{1}{B}\sum_{i=1}^B K_{S^mT_i}y^i\right), K_{S^mS^m}\Lambda K_{S^mS^m} \Bigg)
\end{equation}
\end{proof}

\subsection{Calculation of the Distribution $p((g_k)_T)$}
Recall that $S^{m, k}$ represents the most informative timestamps for the underlying stochastic process $G_k$, associated with the time series data collection $\mathcal{C}_k$ (see Section 2.1). Furthermore, $G_k$ is approximated by a Gaussian process with a parameterized kernel function $K^{\eta_k}$ (see section 2.4). 

We now provide a detailed calculation/approximation of the distribution of the underlying signal $p((g_k)_T)$ and its mean $p_k = \E[(g_k)_T]$, referred to as the \textbf{predicted mean signal} (see Section 2.5). For $k \in \overline{1, L}$, the predicted distribution of the signal vector $(g_k)_T$ is obtained by marginalizing over the signal $(g_k)_{S^{m, k}}$ at the most informative timestamps $S^{m, k}$ for process $G_k$:
\begin{equation}\label{eqn:predicted_dist_marginalization}
p((g_k)_T) = \int p((g_k)_T|(g_k)_{S^{m, k}}) \phi^*((g_k)_{S^{m, k}}) d(g_k)_{S^{m, k}}
\end{equation}
Here the optimal variational distribution $\phi^*$ has the approximate form defined in \cref{eqn:optimal_variational_distribution}. Specifically, $\phi^*$ can be approximated by a Gaussian distribution with the following mean $\mu_k$ and covariance matrix $A_k$, based on \cref{lemma:ELBO_max_formula}:
\begin{align}
\mu_k &= \sigma^{-2}K^{\eta_k}_{S^{m, k}S^{m, k}}\Sigma\left(\frac{1}{B_k}\sum_{i=1}^{B_k} K^{\eta_k}_{S^{m, k}T_{i, k}}y^i\right) \\
A_k &= K^{\eta_k}_{S^{m, k}S^{m, k}}\Sigma K^{\eta_k}_{S^{m, k}S^{m, k}}
\end{align}
where $\Sigma = \Lambda^{-1}$ with $\Lambda := K^{\eta_k}_{S^{m, k}S^{m, k}} + \dfrac{\sigma^{-2}}{B_k}\sum_{i=1}^{B_k} K^{\eta_k}_{S^{m, k}T_{i, k}}K^{\eta_k}_{T_{i,k}S^{m, k}}$.

With this approximation for $\phi^*$, \cref{eqn:predicted_dist_marginalization} simplifies to an integral of two Gaussian distributions. An explicit calculation shows that the distribution of $(g_k)_T$ is approximated by a Gaussian distribution with the following mean and variance: 
\begin{align} \label{predicted_dist}
p_k = \E[(g_k)_T] &= K^{\eta_k}_{TS^{m, k}}(K^{\eta_k}_{S^{m, k}S^{m, k}})^{-1}\mu_k \\
\Var[(g_k)_T] &= K^{\eta_k}_{TT}-K^{\eta_k}_{TS^{m, k}}(K^{\eta_k}_{S^{m, k}S^{m, k}})^{-1}K^{\eta_k}_{S^{m, k}T} \nonumber \\
&+ K^{\eta_k}_{TS^{m, k}} (K^{\eta_k}_{S^{m, k}S^{m, k}})^{-1} A_k (K^{\eta_k}_{S^{m, k}S^{m, k}})^{-1} K^{\eta_k}_{S^{m, k}T}
\end{align}

\section{DATASETS}
\subsection{Basic Datasets}
Twelve publicly available time-series datasets were sourced from the UCR archive \citep{Bagnall2017-lb}. These datasets are the following, with their respective IDs from 1 to 12: Chinatown, ECGFiveDays, FreezerSmallTrain, GunPointOldVersusYoung, HouseTwenty, InsectEPGRegularTrain, ItalyPowerDemand, Lightning7, MoteStrain, PowerCons, SonyAIBORobotSurface2, and UWaveGestureLibraryAll.

\subsection{Pronunciation Audio Dataset}
For the \textbf{Pronunciation Audio} dataset, the audio samples were obtained from publicly available pronunciation recordings \citep{forvo}. The original pronunciation audio files are included in the folder \textbf{data/audio}, and all processing steps related to these files can be found in the provided Python file \textbf{data\_processing.py}. This file contains the full preprocessing pipeline for converting raw audio into time-series data, which was used for benchmarking and experimentation.

\subsection{Parkinson's Disease Sensor Dataset}
During data processing, all personal identifying information (PII) has been thoroughly removed from the dataset to ensure privacy and data security. The sensor data has been aggregated to a per-second level, meaning the original, unaggregated data cannot be recovered, thereby minimizing any risk of data exposure. The processing steps for this dataset are available in the Python file \textbf{parkinson\_data\_processing.py}. This code generates the second-level processed data, which serves as input for all benchmarking algorithms, including Motion Code.

To access the full original data and labeled datasets, researchers must apply for a separate license. To apply for access to the original datasets, follow the instructions provided at: \href{https://www.synapse.org/Synapse:syn20825169/wiki/600903}{https://www.synapse.org/Synapse:syn20825169/wiki/600903}.

In addition, we provide general information for both the Pronunciation Audio data and the Parkinson’s sensor data as processed by us in \cref{dataset_description} below:

\begin{table}[ht]
\begin{center}
\caption{\label{dataset_description}Descriptions of 3 Datasets Processed by Authors.}
\vspace{2mm}
\begin{tabular}{p{3cm}p{1cm}p{1cm}p{1.5cm}p{5cm}} 
\toprule
Dataset & Train & Test & Length & Description \\ [6pt]
\midrule
Pronunciation Audio & 18 & 18 & 80-100 & Amplitude values of the audio datasets for the pronunciations of 2 words with different accents. \\
PD setting 1 & 20 & 322 & 257-1665 & Parkinson's disease sensor data focusing on understanding recovery stage\\
PD setting 2 & 24 & 429 & 208-1665 & Parkinson's disease sensor data focusing on detecting tremor pattern\\
\bottomrule
\end{tabular}
\end{center}
\end{table}

\section{ADDITIONAL FIGURES}
\subsection{Interpretable Features}
\begin{figure}[htb]
  \centering
  \subfloat[October to March Period]{\includegraphics[scale=0.4]{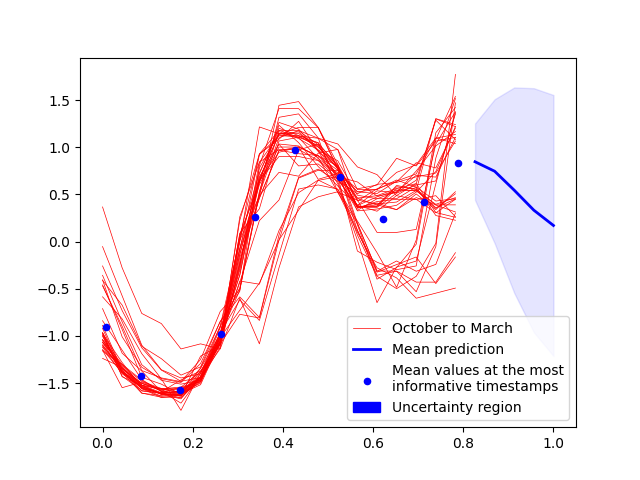}}
  \hfil
  \subfloat[April to September Period]{\includegraphics[scale=0.4]{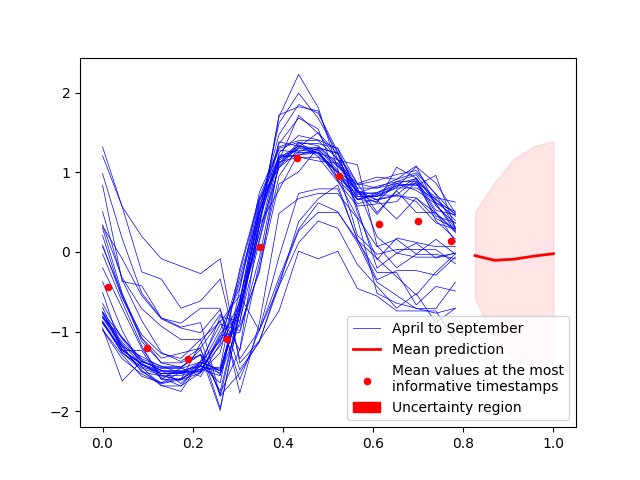}}
  \caption{Forecasting with Uncertainty and Interpretable Features for \textbf{ItalyPowerDemand}.} \label{fig:most_informative_timestamp_prediction_appendix}
\end{figure}

\begin{figure}[htb]
  \centering
  \subfloat[Cement]{\includegraphics[scale=0.4]{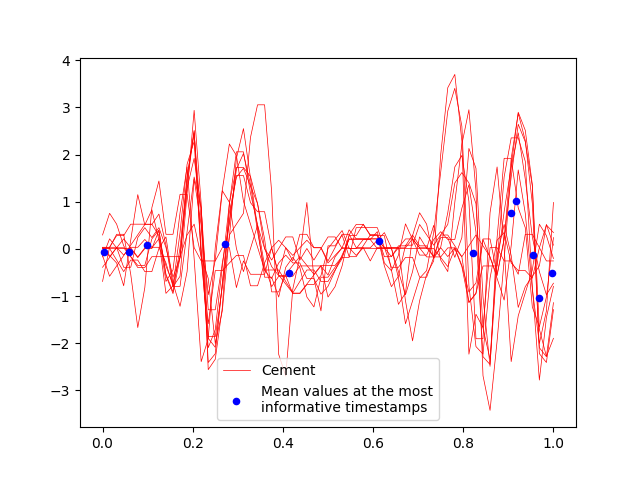}}
  \hfil
  \subfloat[Carpet]{\includegraphics[scale=0.4]{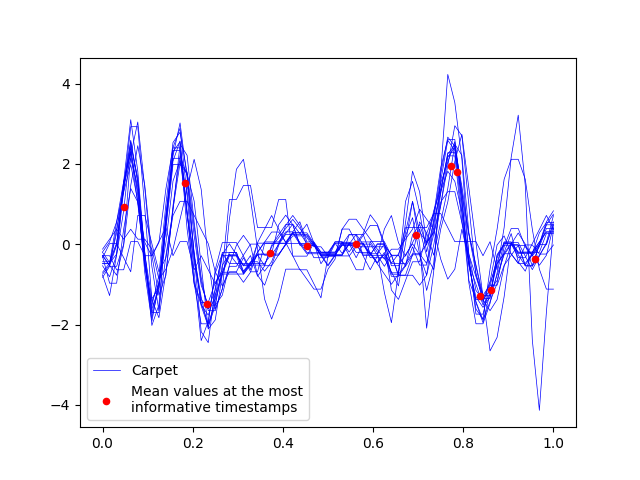}}
  \caption{Interpretable Features for \textbf{SonyAIBORobotSurface2}.} \label{fig:most_informative_timestamp_prediction1_appendix}
\end{figure}

\begin{figure}[htb]
  \centering
  \subfloat[Class 1]{\includegraphics[scale=0.28]{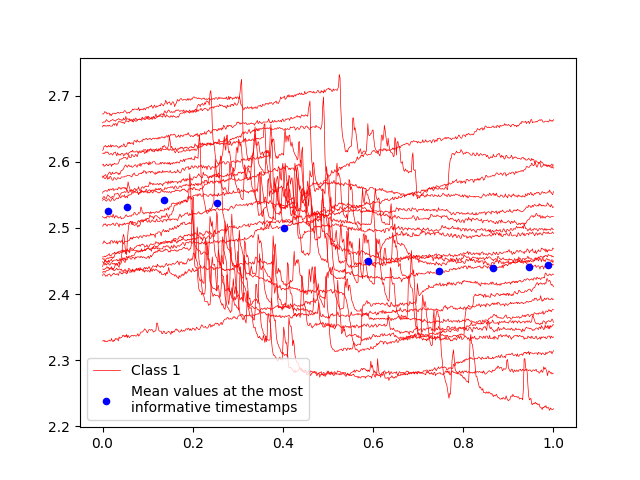}}
  \hfil
  \subfloat[Class 2]{\includegraphics[scale=0.28]{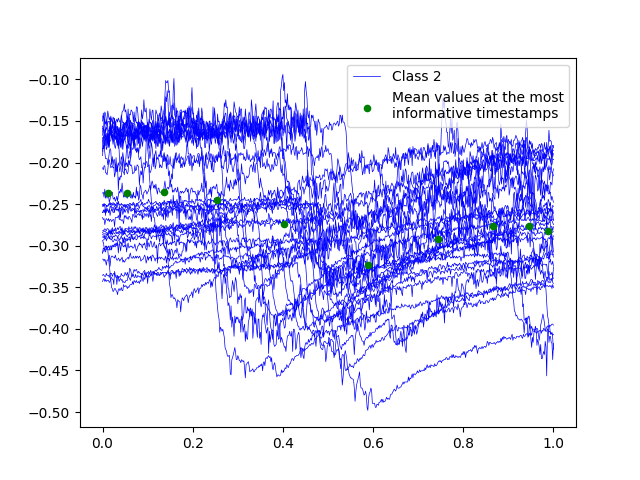}}
  \hfil
  \subfloat[Class 3]{\includegraphics[scale=0.28]{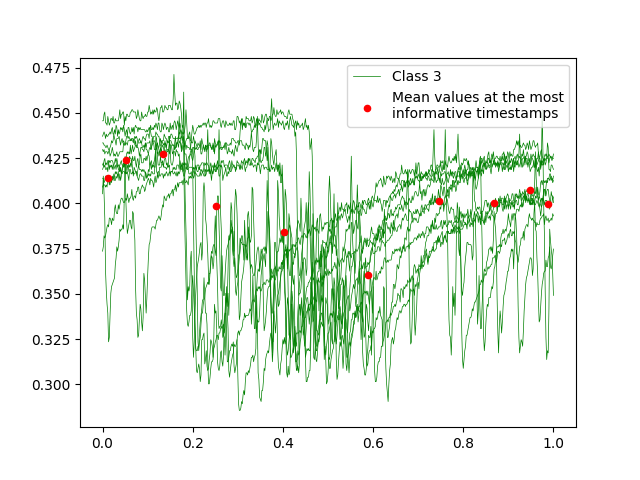}}
  \caption{Interpretable Features for \textbf{InsectEPGRegularTrain}.} \label{fig:most_informative_timestamp_prediction2_appendix}
\end{figure}

\newpage

\subsection{Forecasting with Uncertainty}
\begin{figure}[htb]
  \centering
  \subfloat[October to March Period]{\includegraphics[scale=0.4]{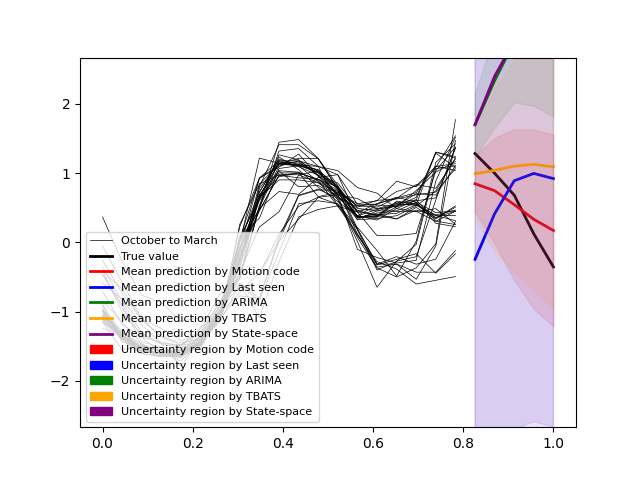}}
  \hfil
  \subfloat[April to September Period]{\includegraphics[scale=0.4]{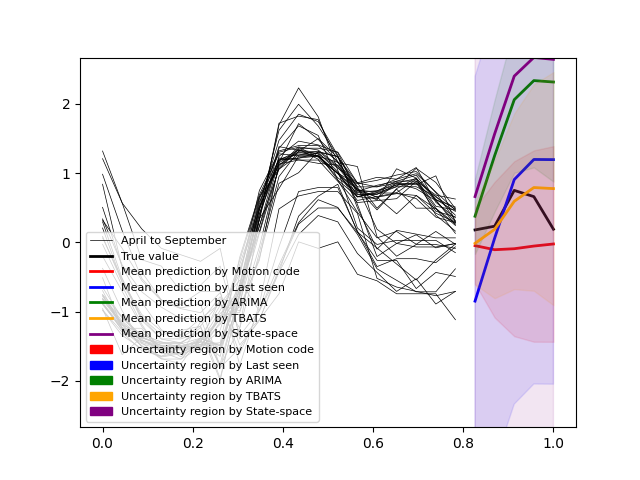}}
  \caption{Forecasting with Uncertainty on \textbf{ItalyPowerDemand} from 5 Forecasting Algorithms}\label{fig:var_ItalyPowerDemand}
\end{figure}

\begin{figure}[htb]
  \centering
  \subfloat[Warm Season in PowerCons]{\includegraphics[scale=0.4]{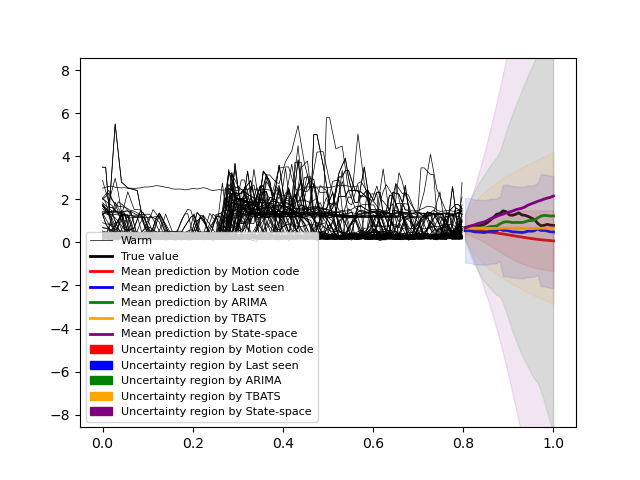}}
  \hfil
  \subfloat[Cold Season in PowerCons]{\includegraphics[scale=0.4]{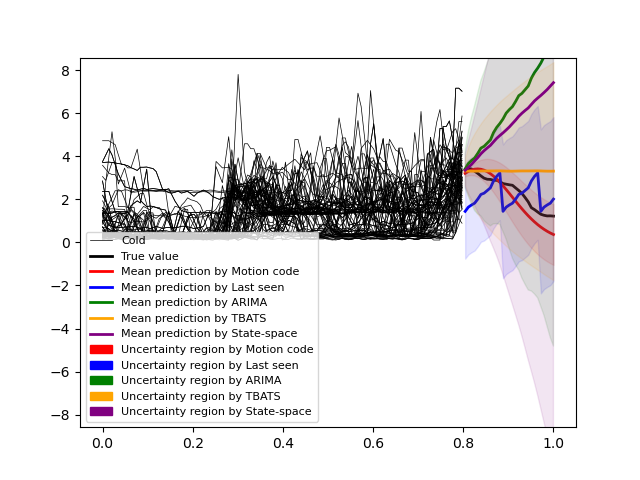}}
  \caption{Forecasting with Uncertainty on \textbf{PowerCons} from 5 Forecasting Algorithms}\label{fig:var_PowerCons}
\end{figure}


\end{document}